\documentclass[english,a4paper,12pt]{article}

\hyphenation{analysis onemax}

\usepackage{amsxtra, amsfonts, amssymb, amstext, amsmath, mathtools}
\usepackage{amsthm}
\usepackage{booktabs}
\usepackage{nicefrac}
\usepackage{xspace}
\usepackage{url}\urlstyle{rm}
\usepackage{graphics,color}
\usepackage[algo2e,ruled,vlined,linesnumbered]{algorithm2e}
\usepackage{wrapfig}

\clubpenalty=10000
\widowpenalty=10000
\newtheorem{theorem}{Theorem}
\newtheorem{lemma}[theorem]{Lemma}
\newtheorem{corollary}[theorem]{Corollary}

\newtheorem{remark}[theorem]{Remark}

\newcommand{\algo}{\mbox{StSt~$\binom{\mu}{2}$~GA{$_0$}}\xspace}

\newcommand{\N}{\ensuremath{\mathbb{N}}} 


\DeclareMathOperator{\cross}{crossover}


\newcommand{\assign}{\leftarrow}

\begin{document}
\title{Runtime Analysis of Evolutionary Algorithms via Symmetry Arguments}

\author{Benjamin Doerr\\ Laboratoire d'Informatique (LIX)\\ CNRS\\ \'Ecole Polytechnique\\ Institut Polytechnique de Paris\\ Palaiseau\\ France
}

\maketitle

\sloppy{
\begin{abstract}
  We use an elementary argument building on group actions to prove that the selection-free steady state genetic algorithm analyzed by Sutton and Witt (GECCO 2019) takes an expected number of $\Omega(2^n / \sqrt n)$ iterations to find any particular target search point. This bound is valid for all population sizes $\mu$. Our result improves over the previous lower bound of $\Omega(\exp(n^{\delta/2}))$ valid for population sizes $\mu = O(n^{1/2 - \delta})$, $0 < \delta < 1/2$. 
\end{abstract}

\section{Introduction}\label{sec:eins}

The theory of evolutionary algorithms (EAs) has produced a decent number of mathematically proven runtime analyses. They explain the working principles of EAs, advise how to use these algorithms and how to choose their parameters, and have even led to the invention of new algorithms. We refer to \cite{AugerD11,DoerrN20,Jansen13,NeumannW10} for introductions to this area. 

Due to the complexity of the probability space describing a run of many EAs, the majority of the runtime analyses regard very simple algorithms. In particular, there are only relatively few works discussing algorithms that employ crossover, that is, the generation of offspring from two parents. Among these, again very few present lower bounds on runtimes; we are aware of such results only in~\cite{DoerrT09,OlivetoW15,SuttonW19}.

In the most recent of these works, Sutton and Witt~\cite[Section~3]{SuttonW19} consider a simple crossover-based algorithm called \algo (made precise in Section~\ref{sec:problem} below). This steady-state genetic algorithm uses a two-parent two-offspring uniform crossover as only variation operator. The two offspring always replace their parents. There is no fitness-based selection and no mutation in this simple process. Clearly, an algorithm of this kind is not expected to be very useful in practice. The reason to study such algorithms is rather that they allow to analyze in isolation how crossover works (more reasons to study this particular algorithm are described in~\cite{SuttonW19}). 

Without fitness-based selection, and thus without regarding the problem to be optimized, one would expect that this algorithm takes an exponential time to find any particular search point of the search space $\Omega = \{0,1\}^n$. Surprisingly, this is not so obvious, at least not when working with a particular initialization of the population. Sutton and Witt~\cite[Theorem~10]{SuttonW19} initialize the algorithm with $\mu/2$ copies of the string $z = (1010\dots10)$ and $\mu/2$ copies of the string $z' = (0101\dots01)$. They argue that this is a population with extremely high diversity, which could thus be beneficial for a crossover-based algorithm. Sutton and Witt show that their algorithm with this initialization and with population size $\mu = O(n^{1/2-\delta})$, $\delta < 1/2$ a constant, takes an expected number of $\Omega(\exp(n^{\delta/2}))$ iterations to generate the target string $x^* = (11\dots 1)$. Apparently, this lower bound is subexponential for all population sizes. It becomes weaker with increasing population size and is trivial for $\mu = \Omega(\sqrt n)$. 

By exploiting symmetries in the stochastic process, we improve the lower bound to $\Omega(2^n / \sqrt n)$ for all values of $\mu$.

\begin{theorem}\label{thm:main}
  Let $t, \mu, n \in \N$ with $\mu$ and $n$ even. Consider a run of the \algo initialized with $\mu/2$ copies of $z = (1010\dots10)$ and $\mu/2$ copies of $z' = (0101\dots01)$. Then the probability that the target string $x^* = (11\dots1)$ is generated in the first $t$ iterations is at most $2t / \binom{n}{n/2}$. In particular, the expected time to generate $x^*$ is at least $\tfrac 14 \binom{n}{n/2} = \Omega(2^n / \sqrt n)$.
\end{theorem}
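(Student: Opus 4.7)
My plan is a two-step symmetry argument. First I would use the XOR action by $z$ to normalize the initial population into $\mu/2$ copies of $0^n$ and $\mu/2$ copies of $1^n$, which turns the target $x^*$ into $z'=(0101\cdots 01)$, a string of Hamming weight $n/2$. Second, this normalized population is invariant under the natural action of the symmetric group $S_n$ on $\{0,1\}^n$ by permuting coordinates, and the whole algorithm is $S_n$-equivariant, so each of the $\binom{n}{n/2}$ Hamming-weight-$n/2$ strings has the same probability of appearing as an offspring in any given iteration. Combining this with the trivial bound ``at most two offspring per iteration'' yields $2/\binom{n}{n/2}$ per iteration, and a union bound over iterations completes the proof.

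For the first step, define $\sigma(x):=x\oplus z$; this is an involution, and uniform crossover commutes with $\sigma$: applying $\sigma$ to the offspring of any pair $(x,y)$ under any crossover mask $m$ yields the offspring of $(\sigma(x),\sigma(y))$ under the same mask. Since \algo is fitness-free and has no position-specific operation, simulating the algorithm with $\sigma$ applied to every individual at every time is equivalent in distribution to $\sigma$ applied to the output of the original simulation; in particular the event ``$x^*$ appears as offspring in iteration $i$ of the original process'' has the same probability as ``$z'$ appears as offspring in iteration $i$ of the normalized process''. For the second step, the normalized initial population is pointwise $S_n$-fixed (both $0^n$ and $1^n$ are fixed by every permutation), and every operation of the algorithm is $S_n$-equivariant---uniform parent selection does not depend on bit content, and permuting the coordinates of a uniform random mask yields a uniform random mask---so a straightforward induction gives $S_n$-invariance of the joint distribution of populations and offspring sets across all iterations.

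To conclude, let $O_i$ be the offspring set of iteration $i$, so $|O_i|\le 2$ always. The $S_n$-invariance forces $\Pr[z'\in O_i]$ to depend only on $|z'|_1=n/2$, hence to equal $\Pr[y\in O_i]$ for every $y$ of Hamming weight $n/2$; summing over such $y$ and using $\mathbb{E}[|O_i|]\le 2$ gives $\Pr[z'\in O_i]\le 2/\binom{n}{n/2}$. Since $x^*\notin P_0$, the target can only be generated through some $O_i$, so by the XOR reduction and a union bound over $i\le t$ the probability of generating $x^*$ within the first $t$ iterations is at most $2t/\binom{n}{n/2}$. The expected-time bound $\mathbb{E}[T]\ge\tfrac14\binom{n}{n/2}=\Omega(2^n/\sqrt n)$ then follows from the tail-sum identity $\mathbb{E}[T]\ge\sum_{t\ge 0}\Pr[T>t]$ restricted to $t\le\binom{n}{n/2}/2$, together with Stirling. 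The main obstacle, though essentially bookkeeping, will be cleanly formulating the induction that propagates $\sigma$-equivariance and $S_n$-invariance through one step of the algorithm; this relies crucially on \algo performing no operation that distinguishes either specific bit values or specific bit positions.
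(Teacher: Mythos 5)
Your proof is correct, and it is the same group-action/symmetry argument at heart, but packaged through a genuinely different decomposition. The paper stays in the original coordinate frame: it takes the stabilizer $S=G_z$ of the initial string $z$ inside the hypercube automorphism group, exhibits the slightly non-obvious elements $\sigma(i,j)$ (swap an even and an odd position and flip both bits), and computes the orbit of $x^*$ under $S$ to be the set of strings at Hamming distance $n/2$ from $z$ (its Lemma~\ref{lem:orbit}), after which equidistribution on the orbit plus a union bound finishes the argument. You instead conjugate once by the reflection $x\mapsto x\oplus z$, turning the initial population into copies of $0^n$ and $1^n$ and the target into a weight-$n/2$ string; the residual symmetry group is then the plain coordinate-permutation group, whose relevant orbit is trivially the weight-$n/2$ level set of size $\binom{n}{n/2}$. (In fact your permutation group, conjugated back by $\oplus z$, is exactly the paper's stabilizer $S$, so the two arguments use the same symmetries.) What each buys: your normalization makes the orbit computation and Remark~\ref{rem:zz} unnecessary and is easy to adapt to any initialization that becomes permutation-invariant after an XOR shift, at the cost of comparing two coupled runs (original vs.\ normalized) on top of the permutation-invariance induction; the paper's single-frame version needs the explicit mixed automorphisms and the orbit lemma, but states the reusable principle more directly (the target is equidistributed over its orbit under the stabilizer of the initial population). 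The equivariance ingredients you would still have to prove---crossover commutes with XORs and coordinate permutations, and selection is content-blind, propagated by induction over iterations---are precisely the paper's Lemmas~\ref{lem:crossover} and~\ref{lem:invariance}, and your per-iteration bound via $E[|O_i|]\le 2$ and the tail-sum estimate match the paper's concluding computation.
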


Our proof is based on a simple group action or symmetry argument. We observe that the automorphisms of the hypercube $\{0,1\}^n$ (viewed as graph) commute with the operations of the \algo. Consequently, if an automorphism $\sigma$ stabilizes the initial individuals $z$ and $z'$ (that is, $\sigma(z) = z$ and $\sigma(z') = z'$), then for any $x \in \{0,1\}^n$ at all times $t$ the probability that the algorithm generates $x$ equals the probability that it generates $\sigma(x)$. 

From this symmetry, we conclude that if $B$ is the set of all $x$ such that there is an automorphism of the hypercube that stabilizes the initial individuals and such that $x = \sigma(x^*)$, then at all times the probability that $x^*$ is generated, is at most $1/|B|$. We compute that $B$ has exactly $\binom{n}{n/2}$ elements. Hence each search point generated by the \algo is equal to $x^*$ only with probability $\binom{n}{n/2}^{-1}$. A simple union bound over the $2t$ search points generated up to iteration $t$ gives the result. 

\section{Precise Problem Statement}\label{sec:problem}

The algorithm regarded in~\cite[Section~3]{SuttonW19}, called \algo, is a selection-free variant of a steady state genetic algorithm proposed earlier in~\cite{Witt18}. It works with the search space $\Omega = \{0,1\}^n$ of bit strings of length $n$, which is a standard representation used in evolutionary computation. The algorithm uses a population of size~$\mu \ge 2$. Each iteration consists of (i)~choosing two different individuals randomly from the population, (ii)~applying a two-offspring uniform crossover, and (iii)~replacing the two parents with the two offspring in the population. 

For two parents $x$ and~$y$, the two offspring $x'$ and $y'$ are generated as follows. For all $i \in [1..n] := \{1, \dots, n\}$ with $x_i = y_i$, we have $x'_i = y'_i = x_i$ with probability one. For all $i \in [1..n]$ with $x_i \neq y_i$, we have $(x'_i,y'_i) = (1,0)$ and $(x'_i,y'_i) = (0,1)$ each with probability $1/2$. 

The pseudocode of the \algo is given in Algorithm~\ref{alg:algo}. We did not specify a termination criterion since we are interested in how long the algorithm takes to find a particular solution when not stopped earlier. We also did not specify how to initialize the population since we will regard a very particular initialization later. We generally view populations as multisets, that is, an individual can be contained multiple times and this is reflected in the uniform random selection of individuals. Formally speaking, this means that a population is a $\mu$-tuple of individuals and individuals should be referred to via their index in this tuple.

\begin{algorithm2e}%
  $t \assign 0$\;
	Initialize $P_0$ with $\mu$ individuals from $\{0,1\}^n$\;
	\For{$t = 1, 2, \ldots$}{
	  Choose from $P_{t-1}$ two random individuals $x$ and $y$ without replacement\;
	  $(x_t,y_t) \assign \cross(x,y)$\;
	  $P_{t} \assign P_{t-1} \setminus \{x,y\} \cup \{x_t,y_t\}$\;     
    }
\caption{The \algo with population size $\mu \ge 2$ operating on the search space $\{0,1\}^n$.}
\label{alg:algo}
\end{algorithm2e}

Since a typical reason why crossover-based algorithms become inefficient is a low diversity in the population, Sutton and Witt consider an initialization of the \algo which has ``extremely high diversity'', namely $\mu/2$ copies of the string $z = (1010 \dots 10)$ and $\mu/2$ copies of the string $z' = (0101 \dots 01)$. This population has the same number of zeros and ones in each bit position and has the maximal number of pairs of individuals with maximal Hamming distance~$n$.\footnote{We recall that the \emph{Hamming distance} $H(x,y)$ of two bit strings $x, y \in \{0,1\}^n$ is defined by $H(x,y) = |\{i \in [1..n] \mid x_i \neq y_i\}|$.} Still, this initialization is fair with respect to the target of generating the string $x^* = (11 \dots 1)$ in the sense that all initial individuals have from $x^*$ a Hamming distance of $n/2$, which is the expected Hamming distance of a random string from $x^*$ (and the expected Hamming distance of any string from a random target).

\section{Proof of the Main Result}

We now prove our main result following the outline given towards the end of Section~\ref{sec:eins}. We do not assume any prior knowledge on groups and their action on sets. 

We view the hypercube $\{0,1\}^n$ as a graph in the canonical way, that is, two bit strings $x, y \in \{0,1\}^n$ are \emph{neighbors} if and only if they differ in exactly one position, that is, if $H(x,y) = 1$. A permutation $\sigma$ of $\{0,1\}^n$ is called \emph{graph automorphism} if it preserves the neighbor relation, that is, if $x$ and $y$ are neighbors if and only if $\sigma(x)$ and $\sigma(y)$ are neighbors. 

Let $G$ be the set of all graph automorphisms of the hypercube. We note that $G$ is a group. More precisely, $G$ is a subgroup of the symmetric group on $\{0,1\}^n$, that is, the group of all permutations of $\{0,1\}^n$ with the composition $\circ$ as group operation. Since different notations are in use, we fix that by $\circ$ we denote the usual composition of functions defined by $(\sigma_2 \circ \sigma_1)(x) = \sigma_2(\sigma_1(x))$ for all $\sigma_1, \sigma_2 \in G$ and $x \in \{0,1\}^n$. By regarding shortest paths, we easily observe that $G$ preserves the Hamming distance, that is, we have $H(x,y) = H(\sigma(x),\sigma(y))$ for all $\sigma \in G$ and $x,y \in \{0,1\}^n$. Hence $G$ is also the group of \emph{isometries} of the metric space $(\{0,1\}^n, H)$.

There are two types of natural automorphisms of the hypercube. 
\begin{itemize}
\item \emph{Rotations}: If $\pi$ is a permutation of $[1..n]$, then $\sigma_\pi$ defined by 
\[\sigma_\pi(x) = (x_{\pi(1)}, \dots, x_{\pi(n)})\]
for all $x \in \{0,1\}^n$ is the automorphism stemming from permuting the entries of $x$ as given by $\pi$. 
\item \emph{Reflections}: If $m \in \{0,1\}^n$, then $\sigma_m$ defined by 
\[\sigma_m(x) = x \oplus m\]
for all $x \in \{0,1\}^n$ is the automorphism stemming from adding the vector $m$ modulo two or, equivalently, performing an exclusive-or with~$m$.
\end{itemize}

We remark (without proof and without using this in our proofs) that the rotations and reflections generate $G$, and more specifically, that each $\sigma \in G$ can be written as product $\sigma = \sigma_\tau \circ \sigma_m$ for suitable $\tau$ and~$m$~\cite{Harary00}. 

The \emph{stabilizer} $G_x$ of a point $x \in \{0,1\}^n$ is the set of all permutations in $G$ fixing this point:
\[G_x := \{\sigma \in G \mid \sigma(x) = x\}.\]
To exploit symmetries in the stochastic process describing a run of the \algo, we now analyze the stabilizer of the initial individuals. Let $S = G_z$ be the stabilizer of the initial search point $z = (1010\dots10)$. We observe that $S$ also fixes the other initial individual $z' = (0101\dots01)$.

\begin{remark}\label{rem:zz}
  $S = G_{z'}$.
\end{remark}

\begin{proof}
Since $z'$ is the unique point with Hamming distance $n$ from $z$ and since $G$ is the group of isometries of the hypercube, any $\sigma \in G$ fixing $z$ also fixes $z'$, that is, $S$ is contained in the stabilizer $G_{z'}$ of $z'$. Via a symmetric argument, $G_{z'} \subseteq S$, and the claim follows.
\end{proof}

We proceed by determining a sufficiently rich subset of $S$. Obviously, any permutation $\tau$ of $[1..n]$ that does not map an even number to an odd one (and consequently does not map an odd number to an even one) has the property that $\sigma_\tau \in S$. Unfortunately, these automorphisms also fix $z^*$ and thus are not useful for our purposes. 

However, also the following automorphisms are contained in $S$. Let $i, j \in [1..n]$ such that $i$ is even and $j$ is odd. Let $m(i,j) \in \{0,1\}^n$ such that $m(i,j)_k = 1$ if and only if $k \in \{i,j\}$. Then $\sigma_{m(i,j)}$ changes zeros to ones and vice versa in the $i$-th and $j$-th position of $x$ and leaves all other positions unchanged. Let $\tau(i,j)$ be the permutation of $[1..n]$ that swaps $i$ and $j$ and fixes all other numbers. Then $\sigma(i,j) := \sigma_{\tau(i,j)} \circ \sigma_{m(i,j)}$ is contained in $S$. 

We use these automorphisms to give a lower bound on the size of the orbit $S(x^*)$ of $x^* = (1, \dots, 1)$ under~$S$. We recall that if $H$ is a group of permutations of a set $\Omega$ and $x \in \Omega$, then the \emph{orbit} $H(x) := \{\sigma(x) \mid \sigma \in H\}$ of $x$ under $H$ is the set of all elements to which $x$ can be mapped via a permutation of $H$. Since it might ease understanding this notion, we note that the orbits form a partition of $\Omega$, but we shall not build on this fact. Let $B := S(x^*)$ denote the orbit of $x^*$ under~$S$. We now determine $B$ and observe that it is relatively large.

\begin{lemma}\label{lem:orbit}
  $B$ consists of all $x \in \{0,1\}^n$ such that $H(x,z) = n/2$. In particular, $|B| = \binom{n}{n/2}$.
\end{lemma}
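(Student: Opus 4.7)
The plan is to establish the set equality $B = \{x \in \{0,1\}^n : H(x,z) = n/2\}$ by proving both inclusions; the size claim is then immediate from a standard count. For the easy direction $B \subseteq \{x : H(x,z) = n/2\}$, I would use that $x^*$ itself satisfies $H(x^*, z) = n/2$ (since $z$ has $1$s at exactly the $n/2$ odd positions and $x^*$ has $1$s everywhere), together with the fact that every $\sigma \in S$ is an isometry of the hypercube that fixes $z$. Hence $H(\sigma(x^*), z) = H(\sigma(x^*), \sigma(z)) = H(x^*, z) = n/2$ for every $\sigma \in S$, giving the inclusion.

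For the reverse inclusion, I would build an explicit element of $S$ sending $x^*$ to any prescribed $y$ with $H(y,z) = n/2$, using only the automorphisms $\sigma(i,j)$ introduced above. Writing $E^c = \{k \in [1..n] : k \text{ even}, y_k = 0\}$ and $O^c = \{k \in [1..n] : k \text{ odd}, y_k = 0\}$, the assumption $H(y,z) = n/2$ expands (via $z_k = 1$ for odd $k$ and $z_k = 0$ for even $k$) into $|O^c| + (n/2 - |E^c|) = n/2$, i.e., the parity-balance identity $|E^c| = |O^c| =: a$. I would then pair the two sets arbitrarily as $(i_1, j_1), \ldots, (i_a, j_a)$ with $i_\ell \in E^c$ even and $j_\ell \in O^c$ odd, and set $\sigma := \sigma(i_a, j_a) \circ \cdots \circ \sigma(i_1, j_1)$, which lies in $S$ because each factor does and $S$ is a group.

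To verify $\sigma(x^*) = y$, I would do a short induction on $\ell$ showing that after applying the first $\ell$ factors to $x^* = (1, \ldots, 1)$, the current string has $0$s exactly at the positions in $\{i_1, j_1, \ldots, i_\ell, j_\ell\}$ and $1$s elsewhere. In the inductive step, because the $a$ pairs are disjoint, coordinates $i_{\ell+1}$ and $j_{\ell+1}$ both currently hold value $1$; applying $\sigma_{m(i_{\ell+1}, j_{\ell+1})}$ flips them both to $0$, after which $\sigma_{\tau(i_{\ell+1}, j_{\ell+1})}$ swaps two equal values and so leaves the string unchanged. After all $a$ steps, $\sigma(x^*)$ has $0$s exactly on $E^c \cup O^c$, which coincides with the zero-set of $y$, so $\sigma(x^*) = y$.

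The size claim $|B| = \binom{n}{n/2}$ then follows from the standard count of binary strings at prescribed Hamming distance from a fixed string. I expect the main conceptual step to be spotting the parity-balance identity $|E^c| = |O^c|$ forced by $H(y,z) = n/2$; once that is in hand the pairing is well-defined and the verification above is a mechanical induction that uses nothing beyond the disjointness of the chosen pairs.
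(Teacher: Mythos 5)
Your proposal is correct and follows essentially the same route as the paper's proof: the forward inclusion via the isometry-plus-stabilizer argument, and the reverse inclusion by observing that $H(x,z)=n/2$ forces equally many even and odd zero-positions, pairing them, and composing the automorphisms $\sigma(i,j)$. Your explicit induction verifying $\sigma(x^*)=y$ merely spells out a verification the paper leaves to the reader.
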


\begin{proof}
  We show first that $B$ cannot contain other elements. Let $x \in B$ and $\sigma \in S$ such that $x = \sigma(x^*)$. Then, using that $\sigma \in G_z$, $x = \sigma(x^*)$, and $\sigma$ is an isometry, we compute $H(x,z) = H(x,\sigma(z)) = H(\sigma(x^*),\sigma(z)) = H(x^*,z) = n/2$. 
  
  We now show that each $x \in \{0,1\}^n$ with $H(x,z) = n/2$ is contained in~$B$. Let $D = \{i \in [1..n] \mid x_i \neq x^*_i\}$. Since $H(x,z) = H(x^*,z)$, we have $x_i = z_i \neq x^*_i$ for exactly half of the $i \in D$ and $x_i \neq z_i = x^*_i$ for the other half. Consequently, $|D|$ is even and there are $k = |D|/2$ distinct even numbers $i_1, \dots, i_k \in [1..n]$ and $k$ distinct odd numbers $j_1, \dots, j_k \in [1..n]$ such that $x_{i_\ell} = z_{i_\ell}$ and $x_{j_\ell} \neq z_{j_\ell}$ for all $\ell \in [1..k]$ and these $2k$ positions are exactly the positions $x$ and $x^*$ differ in. Consequently, $\sigma = \sigma(i_1,j_1) \circ \dots \circ \sigma(i_k,j_k)$ is in~$S$ and satisfies $\sigma(x^*) = x$. 
\end{proof}

We finally argue that the actions of the rotations and reflections in $G$ are compatible with the operations of the \algo. Naturally, this implies the same compatibility statement for automorphisms that can be written as product of rotations and reflections. While we do not need this, we remark that via the above-mentioned result that $G$ is generated by rotations and reflections, the compatibility extends to the full automorphism group $G$. 

We lift the notation of an automorphism to random search points in the obvious way. If $X$ is a random search point (formally speaking, a random variable taking values in $\{0,1\}^n$), then $\sigma(X)$ is the random variable $\sigma \circ X$ defined on the same probability space. Consequently, and equivalently, we have $\Pr[\sigma(X) = x] = \Pr[X = \sigma^{-1}(x)]$ for all $x \in \{0,1\}^n$. Finally, we lift function evaluations to tuples in the obvious way so that, e.g., for the case of pairs we have $\sigma(x,y) = (\sigma(x),\sigma(y))$ for all $x,y \in \{0,1\}^n$ and all functions $\sigma$ defined on $\{0,1\}^n$.

We start by analyzing the crossover operation. We recall that the crossover operator used by the \algo is a randomized operator that generates a pair of search points from a given pair of search points. For all $x, y \in \{0,1\}^n$, the result $(X,Y) := \cross(x,y)$ of applying crossover to $(x,y)$ is distributed as follows. For $a,b,c,d \in \{0,1\}$ define $p(a,b,c,d)$ by
\[
p(a,b,c,d) = 
\begin{cases}
1 & \mbox{if $a=b=c=d$,}\\
\tfrac 12 & \mbox{if $a \neq b$ and $c \neq d$,}\\
0 & \mbox{otherwise}.
\end{cases}
\]
Then for all $u,v \in \{0,1\}^n$, we have 
\[
\Pr[\cross(x,y) = (u,v)] = \prod_{i=1}^n p(x_i,y_i,u_i,v_i).
\]

We use this description of the crossover operation to show that crossover commutes with rotations and reflections.

\begin{lemma}\label{lem:crossover}
  Let $\sigma \in G$ be a rotation or a reflection. Then for all $x,y \in \{0,1\}^n$, $\sigma(\cross(x,y))$ and $\cross(\sigma(x),\sigma(y))$ are equally distributed. This statement remains true if $x$ and $y$ are random search point with randomness stochastically independent from the one used by the crossover operator. These statements remain true when $\sigma$ is a product of a finite number of rotations and reflections.
\end{lemma}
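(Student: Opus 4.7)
The plan is to reduce Lemma~\ref{lem:crossover} to a pointwise identity of probabilities via the explicit product formula $\Pr[\cross(x,y) = (u,v)] = \prod_{i=1}^n p(x_i, y_i, u_i, v_i)$ already established in the excerpt. The whole argument then rests on one elementary invariance of the factor function $p$: simultaneously flipping all four arguments with the same bit leaves $p$ unchanged, that is, $p(a,b,c,d) = p(a \oplus e, b \oplus e, c \oplus e, d \oplus e)$ for every $e \in \{0,1\}$. This is immediate from the definition, since both defining cases ``$a=b=c=d$'' and ``$a \neq b$ and $c \neq d$'' are preserved under such simultaneous flips.

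For a rotation $\sigma_\pi$, I would first observe that $\sigma_\pi^{-1} = \sigma_{\pi^{-1}}$, so that
\[
\Pr[\sigma_\pi(\cross(x,y)) = (u,v)] = \prod_{i=1}^n p(x_i, y_i, u_{\pi^{-1}(i)}, v_{\pi^{-1}(i)}),
\]
while
\[
\Pr[\cross(\sigma_\pi(x), \sigma_\pi(y)) = (u,v)] = \prod_{i=1}^n p(x_{\pi(i)}, y_{\pi(i)}, u_i, v_i),
\]
and the second product is carried to the first by the reindexing $j = \pi(i)$. For a reflection $\sigma_m$, using $\sigma_m^{-1} = \sigma_m$, the two probabilities are $\prod_{i=1}^n p(x_i, y_i, u_i \oplus m_i, v_i \oplus m_i)$ and $\prod_{i=1}^n p(x_i \oplus m_i, y_i \oplus m_i, u_i, v_i)$ respectively. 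These agree factor by factor by the invariance of $p$ applied with $e = m_i$.

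The extension from deterministic to random search points (with independent randomness) follows by conditioning on the realized value of $(x,y)$: the identity just proved holds for every fixed pair, and independence lets one take an expectation in the conditioning. The extension to arbitrary products of rotations and reflections is a one-step induction on the number of factors, applying the single-factor case to the innermost operation and using the inductive hypothesis on the rest. I do not anticipate any genuine obstacle beyond careful bookkeeping of $\pi$ versus $\pi^{-1}$; since we are comparing distributions, not identifying sources of randomness, no coupling argument is needed.
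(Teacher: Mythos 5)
Your proposal is correct and follows essentially the same route as the paper's own proof: the explicit product formula for the crossover distribution, a reindexing argument for rotations, the invariance $p(a,b,c,d)=p(a\oplus e,b\oplus e,c\oplus e,d\oplus e)$ for reflections, conditioning on the realized parents for the random-input case, and induction for products. The bookkeeping of $\pi$ versus $\pi^{-1}$ in your rotation computation is also consistent with the paper's convention $\sigma_\pi(x)_i = x_{\pi(i)}$, so no gap remains.
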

	
\begin{proof}
  Let first $\sigma = \sigma_\tau$ be a rotation induced by some permutation $\tau$ of $[1..n]$. Then for all $u,v \in \{0,1\}^n$, we compute
  \begin{align*}
  \Pr[\cross(\sigma(x),\sigma(y)) = (u,v)] &= \prod_{i=1}^n p(\sigma(x)_i,\sigma(y)_i,u_i,v_i)\\
  &=\prod_{i=1}^n p(x_{\tau(i)},y_{\tau(i)},u_i,v_i),\\
  \Pr[\sigma(\cross(x,y)) = (u,v)] & = \Pr[\cross(x,y) = (\sigma^{-1}(u),\sigma^{-1}(v))] \\
  &=\prod_{i=1}^n p(x_i,y_i,u_{\tau^{-1}(i)},v_{\tau^{-1}(i)})\\
  &=\prod_{i=1}^n p(x_{\tau(i)},y_{\tau(i)},u_i,v_i),
  \end{align*}
  showing the desired equality of distributions. 
	
  Let now $\sigma = \sigma_m$ be a reflection induced by some $m \in \{0,1\}^n$. By definition, $p(a,b,c,d) = p(a \oplus r,b \oplus r, c \oplus r, d \oplus r)$ for all $a, b, c, d, r \in \{0,1\}$. Noting that $\sigma = \sigma^{-1}$, we compute
  \begin{align*}
  \Pr[\cross(\sigma(x),\sigma(y)) = (u,v)] &= \prod_{i=1}^n p(\sigma(x)_i,\sigma(y)_i,u_i,v_i)\\
  &=\prod_{i=1}^n p(x_i \oplus m_i,y_i \oplus m_i,u_i,v_i),\\
  \Pr[\sigma(\cross(x,y)) = (u,v)] & = \Pr[\cross(x,y) = (\sigma^{-1}(u),\sigma^{-1}(v))] \\
  &=\prod_{i=1}^n p(x_i,y_i,u_i \oplus m_i,v_i \oplus m_i)\\
  &=\prod_{i=1}^n p(x_i \oplus m_i,y_i \oplus m_i,u_i,v_i),\\
  \end{align*}
  showing again the desired equality of distributions. This shows the first result in Lemma~\ref{lem:crossover}.
  
  Assume now that $X$ and $Y$ are random search points, that is, that $(X,Y)$ is a random variable defined on some underlying probability space that is independent from the randomness used by the crossover operator. Let $\sigma$ be a rotation or reflection. From the first part of Lemma~\ref{lem:crossover}, we deduce 
  \begin{align*}
  \Pr&[\cross(\sigma(X),\sigma(Y)) = (u,v)] \\
  &= \sum_{x, y \in \{0,1\}^n} \Pr[(X,Y) = (x,y)] \Pr[\cross(\sigma(x),\sigma(y)) = (u,v)]\\
  &= \sum_{x, y \in \{0,1\}^n} \Pr[(X,Y) = (x,y)] \Pr[\sigma(\cross(x,y)) = (u,v)]\\
  &= \Pr[\sigma(\cross(X,Y)) = (u,v)]
  \end{align*}
  for all $u,v \in \{0,1\}^n$. This shows the second claim. 
  
  An elementary induction extends our claims to products of rotations and reflections.
\end{proof}

With Lemma~\ref{lem:crossover}, we now show that rotations and reflections commute (in a suitable sense) with the whole run of the \algo. 

\begin{lemma}\label{lem:invariance}
  Let $\sigma$ be a product of rotations and reflections in $G$. Consider a run of the \algo with initial population $P_0$. Let $X_t$ and $Y_t$ be the random variables describing the two search points generated in iteration $t$. Consider also an independent run of the \algo with initial population $P_0' = \sigma(P_0)$. Let $X'_t$ and $Y'_t$ denote the search points generated in iteration $t$ of this run. Then $(X'_t,Y'_t)$ and $\sigma(X_t,Y_t)$ are identically distributed.
\end{lemma}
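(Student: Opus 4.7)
The plan is to prove the lemma by induction on $t$, but with a strengthened inductive hypothesis: rather than only tracking the offspring pair, I would show that the entire tuple $(P'_t, X'_t, Y'_t)$ has the same distribution as $(\sigma(P_t), \sigma(X_t), \sigma(Y_t))$, where $\sigma$ acts componentwise on the population tuple. Such strengthening is necessary because the selection step in iteration $t+1$ depends on the full population $P'_t$, so Lemma~\ref{lem:crossover} alone (which only speaks about a single crossover event) is not enough to carry an induction that concerns only the offspring.

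The base case $t=0$ is by assumption, since $P_0'=\sigma(P_0)$ deterministically. For the inductive step, I would decompose iteration $t+1$ into three substeps and handle each in turn. First, the parent selection: if the current populations satisfy $P'_t \stackrel{d}{=} \sigma(P_t)$, then choosing two distinct indices uniformly at random is a procedure whose distribution does not depend on the contents of the population, so the selected pair of parents $(x',y')$ in the primed run is distributed as $(\sigma(x),\sigma(y))$, where $(x,y)$ are the parents in the unprimed run. This step uses only that $\sigma$ is a bijection and that selection is done by index. Second, for the crossover step, I apply the second (random-input) version of Lemma~\ref{lem:crossover} to conclude that $\cross(\sigma(x),\sigma(y))$ has the same distribution as $\sigma(\cross(x,y))$; crucially, the lemma requires the randomness of the crossover to be independent of $(x,y)$, which is exactly the setup of the algorithm. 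Third, forming the new population $P'_{t+1} = P'_t \setminus \{x',y'\} \cup \{x'_{t+1}, y'_{t+1}\}$ is again a purely index-based operation, so it commutes with the componentwise action of $\sigma$.

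Combining the three substeps inside a conditioning argument yields the inductive step: conditional on the history, the new triple $(P'_{t+1}, X'_{t+1}, Y'_{t+1})$ is distributed as $\sigma$ applied componentwise to $(P_{t+1}, X_{t+1}, Y_{t+1})$. The final claim of the lemma — on the marginal distribution of $(X'_t, Y'_t)$ alone — follows immediately by projecting out the population components.

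The main obstacle, if any, is essentially bookkeeping: one must be careful that populations are treated as $\mu$-tuples (as stated in Section~\ref{sec:problem}), so that ``applying $\sigma$ to a population'' is unambiguous and commutes with removal-by-index. Once this convention is fixed, all three substeps are essentially structural, and the only genuinely probabilistic content of the argument is Lemma~\ref{lem:crossover}, together with the elementary fact that an independent coupling of two $\sigma$-compatible operations is again $\sigma$-compatible, which justifies the extension from a single rotation or reflection to any finite product thereof.
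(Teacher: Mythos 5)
Your proposal is correct and follows essentially the same route as the paper's proof: an induction over iterations whose hypothesis tracks the whole population (as a $\mu$-tuple) together with the offspring, a conditioning on the population outcome and the selected indices (which are chosen index-wise, hence independently of the population contents), and an application of Lemma~\ref{lem:crossover} for the crossover step. The only cosmetic difference is that you phrase the index-based selection and replacement as separate ``structural'' substeps, which the paper handles in one conditioning argument.
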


\begin{proof}
  Consider the two runs with independent randomness. Denote by $P_t$ and $P'_t$ the populations generated in iteration $t$. For the sake of precision, we now take the view that a population is a $\mu$-tuple of individuals. We show that if $\sigma(P_{t-1})$ and $P'_{t-1}$ are identically distributed, then so are $\sigma(X_t,Y_t,P_t)$ and $(X'_t,Y'_t,P'_t)$. Since $\sigma(P_0)$ and $P'_0$ are identically distributed by assumption, the claim follows by induction over time.
  
  Assume now that for some $t$, the populations $\sigma(P_{t-1})$ and $P'_{t-1}$ are identically distributed. We consider a particular outcome of $P_{t-1}$ that occurs with positive probability. Let $\sigma(P_{t-1})$ be the corresponding outcome of $P'_{t-1}$. Since both these outcomes have the same probability of appearing, we can condition on both and show that $\sigma(X_t,Y_t,P_t)$ and $(X'_t,Y'_t,P'_t)$ are identically distributed in this conditional probability space. 
	
	Let $i, j \in [1..\mu]$ be different. The probability that these are the indices of the two individuals chosen as crossover parents is $\binom{\mu}{2}^{-1}$ in both runs of the \algo. So again we condition on this same outcome in both runs. Now the parents $(x,y)$ in the first run and the parents $(x',y')$ in the second run satisfy $\sigma(x,y) = (x',y')$. By Lemma~\ref{lem:crossover}, $\sigma(\cross(x,y))$ and $\cross(x',y')$ are equally distributed. Since these pairs of search points replace $(x,y)$ and $(x',y')$ in the respective populations, we see that $\sigma(P_t)$ and $P'_t$ are identically distributed when conditioning on $P_{t-1}, P'_{t-1}, i, j$, and, as discussed above, also without conditioning. This concludes the proof. 
\end{proof}

With Lemma~\ref{lem:invariance}, we can easily argue that at each time $t$ all elements of the orbits under $S$ have the same chance of being generated. We need and formulate this statement only for the orbit $B = S(x^*)$.

\begin{corollary}\label{cor:equi}
  Consider a run of the \algo started with the particular initialization $P_0$ described in Section~\ref{sec:problem}. Let $x \in B$ and $t \in \N$. Then $\Pr[x_t = x] = \Pr[x_t = x^*]$ and $\Pr[y_t = x] = \Pr[y_t = x^*]$.
\end{corollary}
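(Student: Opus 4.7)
The plan is to exploit Lemma~\ref{lem:invariance} by producing, for each $x \in B$, an automorphism $\sigma$ that (a) is a product of rotations and reflections, (b) satisfies $\sigma(x^*)=x$, and (c) fixes the initial population $P_0$. Then the two runs compared in Lemma~\ref{lem:invariance} share the same initial population and hence the same distribution, which forces $\sigma(X_t,Y_t)$ and $(X_t,Y_t)$ to be identically distributed; the claim follows at once.

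First I would choose $\sigma$ explicitly. Since $x \in B = S(x^*)$, some element of $S$ maps $x^*$ to $x$. To match the hypothesis of Lemma~\ref{lem:invariance}, I would reuse the witness produced in the proof of Lemma~\ref{lem:orbit}, namely $\sigma = \sigma(i_1,j_1) \circ \cdots \circ \sigma(i_k,j_k)$, in which each factor $\sigma(i_\ell,j_\ell) = \sigma_{\tau(i_\ell,j_\ell)} \circ \sigma_{m(i_\ell,j_\ell)}$ is the composition of one rotation and one reflection; so $\sigma$ is automatically a product of rotations and reflections and satisfies (a) and (b).

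Next I would verify (c), that $\sigma(P_0) = P_0$. Since $\sigma \in S = G_z$ we have $\sigma(z) = z$, and Remark~\ref{rem:zz} gives $\sigma(z') = z'$. Viewing $P_0$ as the $\mu$-tuple of $\mu/2$ copies of $z$ followed by $\mu/2$ copies of $z'$ and applying $\sigma$ entry by entry then yields $\sigma(P_0) = P_0$. Lemma~\ref{lem:invariance} applied to this $\sigma$ asserts that $\sigma(X_t,Y_t)$ is distributed like the offspring pair $(X'_t,Y'_t)$ of an independent run started from $P_0' := \sigma(P_0)$. But $P_0' = P_0$, so $(X'_t,Y'_t)$ has the same law as $(X_t,Y_t)$, and therefore $\sigma(X_t,Y_t)$ and $(X_t,Y_t)$ are identically distributed. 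Taking marginals and using that $\sigma$ is a bijection,
\[
\Pr[X_t = x] = \Pr[X_t = \sigma(x^*)] = \Pr[\sigma(X_t) = \sigma(x^*)] = \Pr[X_t = x^*],
\]
and the same argument applied to the $Y_t$-marginal yields the second equality.

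The single delicate point is the ``product of rotations and reflections'' hypothesis in Lemma~\ref{lem:invariance}: it is tempting to grab an arbitrary $\sigma \in S$ with $\sigma(x^*) = x$, but such a $\sigma$ is only guaranteed to lie in the full automorphism group $G$, and the paper explicitly avoids invoking the unproved-here fact that every element of $G$ decomposes as a rotation composed with a reflection. This is exactly why I would take the explicit $\sigma$ from the proof of Lemma~\ref{lem:orbit}, whose constructive form sidesteps the issue entirely.
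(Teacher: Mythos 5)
Your proof is correct and follows essentially the same route as the paper: take $\sigma \in S$ with $\sigma(x^*) = x$, observe $\sigma(P_0) = P_0$ via Remark~\ref{rem:zz}, and apply Lemma~\ref{lem:invariance} to conclude that $\sigma(X_t,Y_t)$ and $(X_t,Y_t)$ are identically distributed. Your explicit choice of $\sigma$ as the product $\sigma(i_1,j_1)\circ\cdots\circ\sigma(i_k,j_k)$ from the proof of Lemma~\ref{lem:orbit} is in fact slightly more careful than the paper's wording, which takes an arbitrary $\sigma \in S$ and thereby tacitly relies on that same constructive witness (or on the unproved decomposition of $G$ into rotations and reflections) to satisfy the hypothesis of Lemma~\ref{lem:invariance}.
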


\begin{proof}
  Let $\sigma \in S$ such that $\sigma(x^*) = x$. Besides the run of the \algo started with $P_0$, consider a second independent run started with $\sigma(P_0)$. Denote the offspring generated in this run by $x'_t$ and $y'_t$. By Lemma~\ref{lem:invariance}, 
  \[\Pr[x_t = x^*] = \Pr[\sigma(x_t) = \sigma(x^*)] = \Pr[x'_t = \sigma(x^*)] = \Pr[x'_t = x].\] 
  
  Since $\sigma$ is from $S$, by Remark~\ref{rem:zz}, we also have $\sigma(z') = z'$. Hence $P_0 = \sigma(P_0)$. Having the same initial population, the two runs regarded are identically distributed. Consequently, $x_t$ and $x'_t$ are identically distributed and we have 
  \[\Pr[x_t = x^*] = \Pr[x'_t = x] = \Pr[x_t = x].\]   
\end{proof}

We are now in the position to give a proof of Theorem~\ref{thm:main} stated in the introduction.

\begin{proof}
Since $\Pr[x_t = x^*] = \Pr[x_t = x]$ for all $x \in B$ and $t \in \N$ by Corollary~\ref{cor:equi}, we have
\begin{equation}\label{eq:main}
  \Pr[x_t = x^*] = \frac{1}{|B|} \sum_{x \in B} \Pr[x_t = x] = \frac{1}{|B|} \Pr[x_t \in B] \le \frac{1}{|B|}.
\end{equation}  
Naturally, the same estimate holds for $\Pr[y_t = x^*]$. Let the random variable $T$ denote the first iteration in which the search point $x^*$ is generated. Then a simple union bound over time and over the two offspring generated per iteration gives
\[\Pr[T \le t] \le \sum_{i=1}^t (\Pr[x_i = x^*] + \Pr[y_i = x^*]) \le \frac{2t}{|B|} = 2t / \binom{n}{n/2},\]
where the last equality follows from Lemma~\ref{lem:orbit}.

For the bound on the expectation of $T$, we use the standard argument $E[X] = \sum_{x = 1}^\infty \Pr[X \ge x]$ valid for all random variables $X$ distributed on the non-negative integers and compute
\begin{align*}
  E[T] &= \sum_{t = 1}^\infty \Pr[T \ge t]\\
  &\ge \sum_{t = 1}^\infty \max\left\{0, 1 - 2(t-1) \Big/ \binom{n}{n/2}\right\}\\
  & = \sum_{i = 1}^{\frac 12 \binom{n}{n/2}} 2i \Big/\binom{n}{n/2} \ge \frac 14 \binom{n}{n/2}.
\end{align*}
\end{proof}

\section{Conclusion and Open Problems}

We proposed an alternative approach to the problem how long the \algo with a particular initialization takes to generate a particular search point~\cite[Section~3.1]{SuttonW19}. Our lower bound of order $\Omega(2^n / \sqrt n)$, valid for all population sizes $\mu$, is significantly stronger than the previous result, which is at most $\Omega(\exp(n^{1/4}))$ and decreases with increasing population size until it is trivial for $\mu = \Omega(\sqrt n)$. Our main argument based on group actions is elementary and natural, which gives us the hope that similar arguments will find applications in other analyses of EAs. 

We believe that our lower bound is close to the truth, which we expect to be $\Theta(2^n)$, but we do not have a proof for this conjecture (in fact, we do not even know if the runtime is exponential -- unfortunately, the few existing exponential upper bounds only regard mutation-based EAs, see~\cite{Doerr20ppsnUB}). 

We note that when using a random initialization instead of the particular one proposed in~\cite{SuttonW19}, then a lower bound of $\Omega(2^n)$ follows simply from the fact that each search point that is generated is uniformly distributed. This argument, in a sense a toy version of ours, is apparently not widely known in the community; it was used in~\cite[Theorem~1.5.3]{Doerr20bookchapter} for a problem which previously~\cite[Theorem~5]{OlivetoW11} was attacked with much deeper methods. 

It thus seems that the difficulty of the problem posed in~\cite{SuttonW19} not only stems from the use of crossover, but also from the fact that a non-random initialization was used. We note that so far the impact of different initializations has not been discussed intensively in the literature on runtime analysis of EAs. The only works we are aware of are~\cite{Sudholt13,LaillevaultDD15,DoerrD16}. 

In the light of this state of the art, the two \textbf{open problems} of improving the lower bound to $\Omega(2^n)$ and showing an exponential upper bound appear interesting. Any progress here might give us a broader understanding how to analyze EAs using crossover or non-random initializations.

\subsection*{Acknowledgment}

This work was supported by a public grant as part of the
Investissement d'avenir project, reference ANR-11-LABX-0056-LMH,
LabEx LMH.

}


\begin{thebibliography}{dPdLDD15}

\bibitem[AD11]{AugerD11}
Anne Auger and Benjamin Doerr, editors.
\newblock {\em Theory of Randomized Search Heuristics}.
\newblock World Scientific Publishing, 2011.

\bibitem[DD16]{DoerrD16}
Benjamin Doerr and Carola Doerr.
\newblock The impact of random initialization on the runtime of randomized
  search heuristics.
\newblock {\em Algorithmica}, 75:529--553, 2016.

\bibitem[DN20]{DoerrN20}
Benjamin Doerr and Frank Neumann, editors.
\newblock {\em Theory of Evolutionary Computation---Recent Developments in
  Discrete Optimization}.
\newblock Springer, 2020.
\newblock Also available at
  \url{https://cs.adelaide.edu.au/~frank/papers/TheoryBook2019-selfarchived.pdf}.

\bibitem[Doe20a]{Doerr20ppsnUB}
Benjamin Doerr.
\newblock Exponential upper bounds for the runtime of randomized search
  heuristics.
\newblock In {\em Parallel Problem Solving From Nature, PPSN 2020, Part II},
  pages 619--633. Springer, 2020.

\bibitem[Doe20b]{Doerr20bookchapter}
Benjamin Doerr.
\newblock Probabilistic tools for the analysis of randomized optimization
  heuristics.
\newblock In Benjamin Doerr and Frank Neumann, editors, {\em Theory of
  Evolutionary Computation: Recent Developments in Discrete Optimization},
  pages 1--87. Springer, 2020.
\newblock Also available at \url{https://arxiv.org/abs/1801.06733}.

\bibitem[dPdLDD15]{LaillevaultDD15}
Axel de~Perthuis~de Laillevault, Benjamin Doerr, and Carola Doerr.
\newblock Money for nothing: Speeding up evolutionary algorithms through better
  initialization.
\newblock In {\em Genetic and Evolutionary Computation Conference, GECCO 2015},
  pages 815--822. {ACM}, 2015.

\bibitem[DT09]{DoerrT09}
Benjamin Doerr and Madeleine Theile.
\newblock Improved analysis methods for crossover-based algorithms.
\newblock In {\em Genetic and Evolutionary Computation Conference, GECCO 2009},
  pages 247--254. ACM, 2009.

\bibitem[Har00]{Harary00}
Frank Harary.
\newblock The automorphism group of a hypercube.
\newblock {\em Journal of Universal Computer Science}, 6:136--138, 2000.

\bibitem[Jan13]{Jansen13}
Thomas Jansen.
\newblock {\em Analyzing Evolutionary Algorithms -- The Computer Science
  Perspective}.
\newblock Springer, 2013.

\bibitem[NW10]{NeumannW10}
Frank Neumann and Carsten Witt.
\newblock {\em Bioinspired Computation in Combinatorial Optimization --
  Algorithms and Their Computational Complexity}.
\newblock Springer, 2010.

\bibitem[OW11]{OlivetoW11}
Pietro~S. Oliveto and Carsten Witt.
\newblock Simplified drift analysis for proving lower bounds in evolutionary
  computation.
\newblock {\em Algorithmica}, 59:369--386, 2011.

\bibitem[OW15]{OlivetoW15}
Pietro~S. Oliveto and Carsten Witt.
\newblock Improved time complexity analysis of the simple genetic algorithm.
\newblock {\em Theoretical Computer Science}, 605:21--41, 2015.

\bibitem[Sud13]{Sudholt13}
Dirk Sudholt.
\newblock A new method for lower bounds on the running time of evolutionary
  algorithms.
\newblock {\em {IEEE} Transactions on Evolutionary Computation}, 17:418--435,
  2013.

\bibitem[SW19]{SuttonW19}
Andrew~M. Sutton and Carsten Witt.
\newblock Lower bounds on the runtime of crossover-based algorithms via
  decoupling and family graphs.
\newblock In {\em Genetic and Evolutionary Computation Conference, {GECCO}
  2019}, pages 1515--1522. {ACM}, 2019.

\bibitem[Wit18]{Witt18}
Carsten Witt.
\newblock Domino convergence: why one should hill-climb on linear functions.
\newblock In {\em Genetic and Evolutionary Computation Conference, {GECCO}
  2018}, pages 1539--1546. {ACM}, 2018.

\end{thebibliography}

\end{document}